\newcommand{\NBA}{\mbox{\tt{NBS}}}
\newcommand{\Cstar}{\mbox{$C^*$}}
\newcommand{\CurrentBest}{\mbox{$C$}}
\newcommand{\closedB}{\mbox{$Closed_B$}}
\newcommand{\closedF}{\mbox{$Closed_F$}}
\newcommand{\open}{\mbox{$Open$}}
\newcommand{\openF}{\mbox{$Open_F$}}
\newcommand{\fF}{\mbox{$f_F$}}
\newcommand{\cF}{\mbox{$c$}}
\newcommand{\hF}{\mbox{$h_F$}}
\newcommand{\openB}{\mbox{$Open_B$}}
\newcommand{\fB}{\mbox{$f_B$}}
\newcommand{\cB}{\mbox{$c$}}
\newcommand{\hB}{\mbox{$h_B$}}
\newcommand{\ready}{\mbox{\em ready}}
\newcommand{\wait}{\mbox{\em waiting}}
\newcommand{\readyD}{\mbox{\em ready}$_D$}
\newcommand{\readyF}{\mbox{\em ready}$_F$}
\newcommand{\readyB}{\mbox{\em ready}$_B$}
\newcommand{\waitD}{\mbox{\em waiting}$_D$}
\newcommand{\waitF}{\mbox{\em waiting}$_F$}
\newcommand{\waitB}{\mbox{\em waiting}$_B$}
\newtheorem{theorem}{Theorem}
\newtheorem{definition}{Definition}
\newtheorem{corollary}[theorem]{Corollary}
\newcommand{\comment}[1]{}
\definecolor{brown}{cmyk}{0,0.5,1,0.2}
\definecolor{myGreen}{rgb}{0,0.8,0.4}
\definecolor{myRed}{rgb}{1,0,0}
\definecolor{myBlue}{rgb}{0,0.6,1.0}
\definecolor{myPurple}{rgb}{0.7,0.0,1.0}
\newcommand{\RHA}[1]{{#1}}
\newcommand{\fadm}{\RHA{forward admissible}}
\newcommand{\fcons}{\RHA{forward consistent}}
\newcommand{\feh}{\RHA{front-to-end heuristic}}
\newcommand{\badm}{\RHA{backward admissible}}
\newcommand{\bcons}{\RHA{backward consistent}}
\newcommand{\DXBB}{\mbox{DXBB}}
\newcommand{\VC}{\mbox{VC}}
\title{Front-to-End Bidirectional Heuristic Search with Near-Optimal Node Expansions}
\author{
Jingwei Chen \\
Dept. of Comp. Sci.\\
University of Denver \\
USA \\
{\tt \small jingchen@cs.du.edu}
  \And Robert C. Holte \\
  Comp. Sci. Dept.\\
  University of Alberta \\
  Canada \\
  {\tt \small rholte@ualberta.ca} \\
  \And Sandra Zilles \\
    Comp. Sci. Dept.\\
    University of Regina \\
    Canada \\
     {\tt \small zilles@uregina.ca}
\And
Nathan R. Sturtevant  \\
Dept. of Comp. Sci.\\
University of Denver \\
USA \\
{\tt \small sturtevant@cs.du.edu } \\
}
\begin{document}

\maketitle

\begin{abstract}

It is well-known that any admissible unidirectional heuristic search algorithm must expand all states whose $f$-value is smaller than the optimal solution cost when using a consistent heuristic. Such states are called ``surely expanded'' (s.e.). A recent study characterized s.e.\ pairs of states for \emph{bidirectional}\/ search with consistent heuristics: if a pair of states is s.e.\ then at least one of the two states must be expanded.
This paper derives a lower bound, $\VC$, on the minimum number of expansions required to cover all s.e.\ pairs, and present a new admissible front-to-end bidirectional heuristic search algorithm, Near-Optimal Bidirectional Search (\NBA), that is guaranteed to do no more than $2\VC$ expansions. We further prove that no admissible front-to-end algorithm has a worst case better than $2\VC$. Experimental results show that \NBA\ competes with or outperforms existing bidirectional search algorithms, and often outperforms A* as well.

\end{abstract}

\section{Introduction}

One method of formally assessing the efficiency of a heuristic search algorithm is to establish upper and lower bounds on the number of nodes it expands on any given problem instance $I$. Such bounds can then be compared to a theoretical minimum that a competing heuristic search algorithm would have to expand on $I$.
In this context, one is interested in finding sufficient conditions for node expansion, i.e., conditions describing nodes that must provably be expanded by any competing algorithm.
In a unidirectional search an algorithm must expand every node whose $f$-value is less than the optimal solution cost; this condition establishes the optimality of A*~\cite{AstarOptimal}.

Sufficient conditions for node expansion have recently been developed for front-to-end bidirectional heuristic search~\cite{eckerle17sufficient}, but no existing front-to-end bidirectional search algorithm is provably optimal.
In this paper, we use these sufficient conditions
to derive a simple graph-theoretic characterization of nodes that must provably be expanded on a problem instance $I$ by any admissible front-to-end bidirectional search algorithm given a consistent heuristic. In particular, the set of nodes expanded must correspond to a vertex cover of a specific graph derived from $I$. We then adapt a known vertex cover algorithm~\cite{papadimitriou1982combinatorial}  into a new admissible front-to-end bidirectional search algorithm,  \NBA\ (Near-Optimal Bidirectional Search), and prove that \NBA\ never expands more than twice the number of nodes contained in a minimum vertex cover. Hence, the number of nodes expanded by \NBA\ is provably within a factor of two of optimal. 

We further establish that no admissible bidirectional front-to-end algorithm can be better than \NBA\ in the worst case. In that sense, we formally verify that \NBA\ is near-optimal in the general case and optimal in the worst case.
In an experimental study on a set of standard benchmark problems, \NBA\ either competes with or outperforms existing bidirectional search algorithms, and it often outperforms the unidirectional algorithm A*, especially when the heuristic is weak or the problem instance is hard.

\section{Related Work}

Bidirectional search has a long history, beginning with bidirectional brute force search~\cite{Nicholson66}, and proceeding to heuristic search algorithms such as BHPA~\cite{pohl71}. 
Other notable algorithms include BS*~\cite{KwaBS1989}, which avoids re-expanding states in both directions, and MM~\cite{holte2016mm}, which ensures that the search frontiers meet in the middle.
Along with these algorithms there have been explanations for the poor performance of bidirectional heuristic search, including that the frontiers miss~\cite{Nilsson82} or that the frontiers meet early, and a long time is spent proving the optimal solution~\cite{KaindlKainz97}. Recent work has refined this, showing that with strong heuristics the frontiers meet later~\cite{BarkerKorf15}.

\section{Terminology and Notation}

We use the same notation and terminology as~\cite{eckerle17sufficient}.
A state space $G$ is a finite directed graph whose vertices are states and whose edges are pairs of states.\footnote{If $G$ has multiple edges from state $u$ to state $v$, we ignore all but the cheapest of them.}
Each edge $(u,v)$ has a cost $c(u,v)\ge 0$. A forward path in $G$ is a finite sequence $U=(U_0,\ldots, U_n)$ of states in $G$ where $(U_i,U_{i+1})$ is an edge in $G$ for  $0\le i< n$.
We say that forward path $U$ contains edge $(u,v)$ if $U_i=u$ and $U_{i+1}=v$ for some $i$. 
Likewise, a backward path is a finite sequence $V=(V_0,\ldots, V_m)$ of states where $(V_i,V_{i+1})$ is a ``reverse'' edge, i.e. $(V_{i+1},V_i)$ is an edge in $G$ for  $0\le i< m$.
Backward path $V$ contains reverse edge $(u,v)$ if $V_i=u$ and $V_{i+1}=v$ for some $i$.
The reverse of path $V=(V_0,\ldots, V_m)$ is  $V^{-1}=(V_m,\ldots, V_0)$.
The cost of a reverse edge equals the cost of the corresponding original edge.
A path pair $(U,V)$ has a forward path ($U$) as its first component and a backward path ($V$) as its second component.

If $U$ is a path (forward or backward), $|U|$ is the number of edges in $U$,
$c(U)$ is the cost of $U$ (the sum of the costs of all the edges in $U$),
and
$U_i$ is the $i^{th}$ state in $U$ ($0 \le i \le |U|$).
$U_{|U|}$ is the last state in path $U$, which we also denote $end(U)$.
$\lambda_F = (start)$ and $\lambda_B = (goal)$ are the empty forward and backward paths from $start$ and $goal$, respectively.  Note that $end(\lambda_F)=start$ while $end(\lambda_B)=goal$. Both $\lambda_F$ and $\lambda_B$ have a cost of $0$.
Forward (backward, resp.) path $U$ is optimal if there is no cheaper forward (backward, resp.) path from $U_0$ to $end(U)$.
$d(u,v)$ is the distance from state $u$ to state $v$, i.e., the cost of the cheapest forward path from $u$ to $v$.
If there is no forward path from $u$ to $v$ then $d(u,v) = \infty$.
Given two states in $G$, $start$ and $goal$, a solution path is a forward path from $start$ to $goal$. $\Cstar=d(start,goal)$ is the cost of the cheapest solution path.

A heuristic maps an individual state in $G$ to a non-negative real number or to $\infty$.
Heuristic $h_F$ is \emph{\fadm}\ iff $h_F(u)\le d(u,goal)$ for all $u$ in $G$ and is \emph{\fcons}\ iff $h_F(u)\le d(u,u^\prime)+h_F(u^\prime)$ for all $u$ and $u^\prime$ in~$G$.
Heuristic $h_B$ is \emph{\badm}\ iff $h_B(v)\le d(start,v)$ for all $v$ in $G$ and is \emph{\bcons}\ iff $h_B(v)\le d(v^\prime,v)+h_B(v^\prime)$ for all $v$ and $v^\prime$ in~$G$.
For any forward path $U$ with $U_0=start$ define $f_F(U)=c(U)+h_F(end(U))$, and for any backward path $V$ with $V_0=goal$ define $f_B(V)=c(V)+h_B(end(V))$.

A problem instance is defined by specifying two {\feh}s, \hF\ and \hB, and a state space $G$
represented implicitly by a 5-tuple $(start,goal,c,expand_F,expand_B)$ consisting of
a start state ($start$), a goal state ($goal$), an edge cost function ($c$), a successor function ($expand_F$), and a predecessor function ($expand_B$).
The input to $expand_F$ is a forward path $U$. Its output is a sequence $(U^1,\ldots,U^n)$,
where each $U^k$ is a forward path consisting of $U$ followed by one additional state ($end(U^k)$) such that $(end(U),end(U^k))$ is an edge in $G$.
There is one $U^k$ for every state $s$ such that $(end(U),s)$ is an edge in $G$.
Likewise,
the input to $expand_B$ is a backward path $V$ and its output is a sequence $(V^1,\ldots,V^m)$,
where each $V^k$ is a backward path consisting of $V$ followed by one additional state ($end(V^k)$) such that $(end(V^k),end(V))$ is an edge in $G$.
There is one $V^k$ for every state $s$ such that $(s,end(V))$ is an edge in $G$.

Although the expand functions operate on paths, it is sometimes convenient to talk about states being expanded. We say state $u$ has been expanded if one of the expand functions has been applied to a path $U$ for which $end(U)=u$. Finally, we say that a state pair $(u,v)$ has been expanded if either $u$ has been expanded in the forward direction or $v$ has been expanded in the backward direction (we do not require both).

A problem instance is solvable if there is a forward path in $G$ from $start$ to $goal$.
$I_{AD}$ is the set of solvable problem instances in which \hF\ is \fadm\ and \hB\ is \badm. $I_{CON}$ is the subset of $I_{AD}$ in which \hF\ is \fcons\ and \hB\ is \bcons. A search algorithm is {\em admissible} iff it is guaranteed to return an optimal solution for any problem instance in $I_{AD}$.

We only consider \DXBB~\cite{eckerle17sufficient}
algorithms. These are deterministic algorithms that proceed by expanding states that have previously been generated and have only black-box access to the expand, heuristic, and cost functions.

\section{Sufficient Conditions for Node Expansion}

This paper builds on recent theoretical work~\cite{eckerle17sufficient} defining sufficient conditions for state expansion for 
bidirectional \DXBB\ search algorithms. While A* with a consistent heuristic necessarily expands all states with $\fF(s) < \Cstar$, in bidirectional search there is no single state that is necessarily expanded, as the search can proceed forward or backwards, avoiding the need to expand any single state. However, given a path pair $(U, V)$, that meets the following conditions, one of the paths' end-states must necessarily be expanded.

\begin{theorem}\label{thm:sufficient_condition}~\cite{eckerle17sufficient}
	Let $I = (G,h_F,h_B)\in I_{CON}$ have an optimal solution cost of $\Cstar$.
	If $U$ is an optimal forward path and $V$ is an optimal backward path such that $U_0=start$, $V_0=goal$, and:
	\[
		\max\{\fF(U),\fB(V),\cF(U)+\cB(V)\} < \Cstar\,,
	\]
	then, in solving problem instance $I$, any admissible \DXBB\ bidirectional front-to-end search algorithm must expand the state pair ($end(U), end(V))$.
\end{theorem}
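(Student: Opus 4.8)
The plan is to argue by contradiction using an adversary/indistinguishability construction that exploits the black-box nature of \DXBB\ algorithms. Suppose some admissible \DXBB\ front-to-end algorithm $A$ solves $I$ without expanding the pair $(end(U),end(V))$; that is, $A$ never applies $expand_F$ to a path ending at $end(U)$ nor $expand_B$ to a path ending at $end(V)$. Writing $u^*=end(U)$ and $v^*=end(V)$, and noting that optimality of $U$ and $V$ gives $\cF(U)=d(start,u^*)$ and $\cB(V)=d(v^*,goal)$, I would build a second instance $I'$ identical to $I$ except for a single new edge $(u^*,v^*)$ of cost $w$, chosen so that $I'$ has a strictly cheaper solution, yet remains in $I_{AD}$ and is indistinguishable from $I$ to $A$.

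The key step is choosing $w$. I would set
\[ w=\max\{0,\ \hF(u^*)-\cB(V),\ \hB(v^*)-\cF(U)\}. \]
Then $U$, the new edge, and the reverse of $V$ concatenate into a solution of cost $\cF(U)+w+\cB(V)$; using all three hypotheses — $\fF(U)<\Cstar$, $\fB(V)<\Cstar$, and $\cF(U)+\cB(V)<\Cstar$ — each term inside the max is strictly below $\Cstar-\cF(U)-\cB(V)$, so this solution costs strictly less than $\Cstar$. Preserving admissibility of the heuristics in $I'$ is where consistency does the real work: since adding $(u^*,v^*)$ can only shorten distances, I would check that (writing $d'$ for distances in $I'$) any path from $u$ to $goal$ through the new edge costs at least $d(u,u^*)+w+d(v^*,goal)$, and forward consistency $\hF(u)\le d(u,u^*)+\hF(u^*)$ together with $w\ge\hF(u^*)-\cB(V)$ keeps $\hF(u)\le d'(u,goal)$; the symmetric calculation with backward consistency preserves backward admissibility. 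Hence $I'\in I_{AD}$.

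Finally I would run $A$ on $I'$ and argue by induction on the execution that $A$ behaves identically on $I$ and $I'$: the two instances differ only in the new edge, which $A$ can discover only by expanding $u^*$ forward or $v^*$ backward; since $A$ is deterministic and does neither on $I$, its queries and their responses coincide on both instances, so it returns the same solution. That solution uses only original edges, so it still has cost $\Cstar$ in $I'$, contradicting admissibility of $A$ on $I'$, where a strictly cheaper solution exists.

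I would flag the admissibility-preservation step as the main obstacle: it is the only place that genuinely needs \emph{consistency} rather than mere admissibility of \hF\ and \hB, and it requires the precise feasibility check that the lower bounds forced on $w$ by consistency stay below the upper bound forced by the ``$<\Cstar$'' hypotheses. The indistinguishability induction is routine but must be stated carefully, since it is what converts ``does not expand the pair'' into ``cannot tell $I$ from $I'$.''
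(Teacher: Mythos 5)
This paper never proves Theorem~\ref{thm:sufficient_condition} itself: it is imported, with citation, from \cite{eckerle17sufficient}, so there is no in-paper proof to compare yours against. That said, your reconstruction is correct and is essentially the adversary argument of the cited source. Writing $u^*=end(U)$ and $v^*=end(V)$, your choice $w=\max\{0,\,\hF(u^*)-c(V),\,\hB(v^*)-c(U)\}$ makes the solution formed by $U$, the new edge, and $V^{-1}$ cost exactly $\max\{\fF(U),\fB(V),c(U)+c(V)\}=lb(U,V)<\Cstar$; consistency of the heuristics combined with optimality of $U$ and $V$ (which gives $c(U)=d(start,u^*)$ and $c(V)=d(v^*,goal)$) keeps both heuristics admissible in the modified instance $I'$, hence $I'\in I_{AD}$; and since the new edge can be revealed only by a forward expansion of $u^*$ or a backward expansion of $v^*$, a deterministic black-box algorithm that does neither behaves identically on $I$ and $I'$, returning a cost-\Cstar\ solution on $I'$ and contradicting admissibility. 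Your closing observation is also exactly the right reading of the hypotheses: the contradiction needs only $I'\in I_{AD}$, which is why consistency appears as a condition on the input instance ($I\in I_{CON}$) while admissibility over all of $I_{AD}$ is the property that gets violated.
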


The $\cF(U)+\cB(V)$ condition was not used by BS*, is degenerate in A*, but is used by MM. When the heuristic is weak, this is an important condition for early termination.

We now use these conditions for state-pair expansion to define a bipartite graph.

\begin{definition}\label{def:lb}
	For path pair $(U, V)$ define
	\[
		lb(U, V) =\max\{\fF(U),\fB(V),\cF(U)+\cB(V)\} \,.
	\]
\end{definition}

When \hF\ is forward admissible and \hB\ is backward admissible, $lb(U, V)$ is a lower bound on the cost of a solution path of the form $UZV^{-1}$, where $Z$ is a forward path from $end(U)$ to $end(V)$.

\begin{definition}\label{def:must_expand_graph}
	The Must-Expand Graph $G_{\mathit{MX}}(I)$ of problem instance $I = (G,\hF,\hB) \in I_{CON}$ is an undirected, unweighted bipartite graph defined as follows.
	For each state $u \in G$, there are two vertices in $G_{\mathit{MX}}(I)$, the left vertex $u_F$ and right vertex $u_B$.
	For each pair of states  $u, v \in G$, there is an edge in $G_{\mathit{MX}}(I)$ between $u_F$ and $v_B$ if and only if there exist an optimal forward path $U$ with $U_0=start$ and $end(U)=u$ and an optimal backward path $V$ with $V_0=goal$ and $end(V)=v$  such that $lb(U,V) < \Cstar$.  Thus, there is an edge in $G_{\mathit{MX}}(I)$ between $u_F$ and $v_B$ if and only if Theorem~\ref{thm:sufficient_condition} requires the state pair $(u,v)$ to be expanded.
\end{definition}

We illustrate this in Figures~\ref{fig:statespace} and \ref{fig:must_expand_graph}. Figure~\ref{fig:statespace} shows a problem instance $I = (G,\hF,\hB) \in I_{CON}$. In this example $a$ is the start state, $f$ is the goal, and $\Cstar = 3$.
Figure~\ref{fig:must_expand_graph} shows $G_{\mathit{MX}}(I)$, where $d$ refers to the cost of the shortest path to each state and $f$ refers to the $f$-cost of that path.
By construction, the edges in $G_{\mathit{MX}}(I)$ exactly correspond to the state pairs that must be expanded according to Theorem~\ref{thm:sufficient_condition}, and therefore any vertex cover for $G_{\mathit{MX}}(I)$ will, by definition, represent a set of expansions that covers all the required state pairs.  For example, one possible vertex cover includes exactly the vertices in the left side with at least one edge--$\{a_F,c_F,d_F,e_F\}$.  This represents expanding all the required state pairs in the forward direction.  This requires four expansions and is not optimal because the required state pairs can be covered with just three expansions: $a$ and $c$ in the forward direction and $f$ in the backward direction. This corresponds to a minimum vertex cover of $G_{\mathit{MX}}(I): \{a_F,c_F,f_B\}$.

\begin{figure}[b!]
	\centering
	\includegraphics[width=6.0cm]{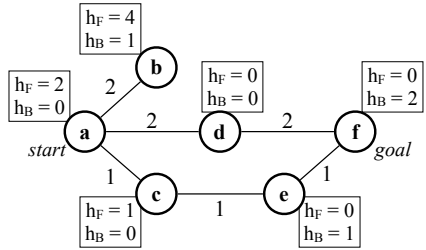}
	\caption{A sample problem instance.}
	\label{fig:statespace}
\end{figure}

\begin{figure}[b!]
	\centering
	\includegraphics[width=6cm]{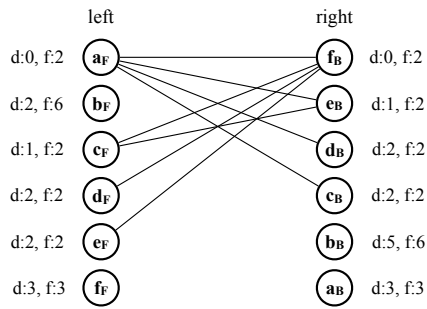}
	\caption{The Must-Expand Graph for Figure~\ref{fig:statespace}, where \Cstar=3.}
	\label{fig:must_expand_graph}
\end{figure}

\begin{theorem}\label{thm:goodnews}
	Let $I\in I_{CON}$. Let $A$ be an admissible \DXBB\ bidirectional front-to-end search algorithm, and $S_F$ (resp.\ $S_B$) be the set of states expanded  by $A$ on input $I$ in the forward (resp.\ backward) direction. Together, $S_F$ and $S_B$ correspond to a vertex cover for $G_{\mathit{MX}}(I)$. In particular, $|S_F|+|S_B|$ is lower-bounded by the size of the smallest vertex cover for $G_{\mathit{MX}}(I)$.
\end{theorem}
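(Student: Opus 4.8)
The plan is to exhibit, from the expanded sets $S_F$ and $S_B$, a concrete vertex cover of $G_{\mathit{MX}}(I)$ and then read off the lower bound from the definition of a minimum vertex cover. I associate with the run of $A$ the vertex set $W = \{u_F : u \in S_F\} \cup \{v_B : v \in S_B\}$, i.e.\ the left vertex of each forward-expanded state together with the right vertex of each backward-expanded state. First I would fix an arbitrary edge of $G_{\mathit{MX}}(I)$. By Definition~\ref{def:must_expand_graph} such an edge joins a left vertex $u_F$ to a right vertex $v_B$, and its presence certifies the existence of an optimal forward path $U$ with $U_0 = start$ and $end(U) = u$ and an optimal backward path $V$ with $V_0 = goal$ and $end(V) = v$ satisfying $lb(U,V) < \Cstar$.

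The key step is then to invoke Theorem~\ref{thm:sufficient_condition}. Since $lb(U,V) = \max\{\fF(U),\fB(V),\cF(U)+\cB(V)\} < \Cstar$ and $A$ is an admissible \DXBB\ bidirectional front-to-end search algorithm, Theorem~\ref{thm:sufficient_condition} forces $A$ to expand the state pair $(end(U),end(V)) = (u,v)$. By the definition of an expanded state pair, this means that either $u$ was expanded in the forward direction, whence $u \in S_F$ and $u_F \in W$, or $v$ was expanded in the backward direction, whence $v \in S_B$ and $v_B \in W$. In either case at least one endpoint of the chosen edge lies in $W$. As the edge was arbitrary, $W$ meets every edge of $G_{\mathit{MX}}(I)$ and is therefore a vertex cover.

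For the stated inequality I would observe that the left-vertex and right-vertex sides of $G_{\mathit{MX}}(I)$ are disjoint and that $u \mapsto u_F$ and $v \mapsto v_B$ are injective, so $|W| = |S_F| + |S_B|$ exactly. Since a minimum vertex cover is by definition no larger than any vertex cover, its size is at most $|S_F| + |S_B|$, which is precisely the claimed lower bound.

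I do not anticipate a deep obstacle: the argument is essentially a translation of the sufficient-condition theorem into the language of graph covering. The one place that demands care is aligning the disjunctive meaning of ``the state pair $(u,v)$ has been expanded'' (forward expansion of $u$ \emph{or} backward expansion of $v$) with the disjunctive covering requirement of a vertex cover (at least one endpoint selected). Verifying that these two disjunctions match edge-for-edge is exactly what makes the reduction tight, and it is the crux of the whole correspondence.
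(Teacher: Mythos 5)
Your proof is correct and follows the same route as the paper's: fix an arbitrary edge $(u_F,v_B)$ of $G_{\mathit{MX}}(I)$, invoke Theorem~\ref{thm:sufficient_condition} to conclude that $A$ must expand $u$ forward or $v$ backward, and conclude that the expanded states form a vertex cover whose size bounds the minimum from above. Your additional remarks on the injectivity of $u \mapsto u_F$, $v \mapsto v_B$ and the exact equality $|W| = |S_F| + |S_B|$ merely make explicit what the paper leaves implicit.
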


\noindent\emph{Proof.\/}
Let $(u_F,v_B)$ be an edge in $G_{\mathit{MX}}(I)$. Then Theorem~\ref{thm:sufficient_condition} requires the state pair $(u,v)$ to be expanded by $A$ on input $I$, i.e., $A$ must expand $u$ in the forward direction or $v$ in the backward direction. Thus the set of states expanded by $A$ on input $I$ corresponds to a vertex cover of $G_{\mathit{MX}}(I)$.
\qed

We will show below (Theorem~\ref{thm:worstcase}) that this lower bound cannot be attained by any admissible \DXBB\ bidirectional front-to-end search algorithm. However, we devise an {\em admissible} algorithm, \NBA, which efficiently finds a near-optimal vertex cover and thus is near-optimal in terms of necessary node expansions.

The claim of near-optimality is only with respect to the state pairs that must be expanded according to Theorem~\ref{thm:sufficient_condition}, it does not take into account state pairs of the form $(end(U),end(V))$ when $lb(U,V)=\Cstar$.  In principle, \NBA\ could expand many such state pairs while some other algorithm does not. We investigate this further in our experiments.

\section{\NBA: A Near-Optimal Front-to-End Bidirectional Search Algorithm}

While a vertex cover can be computed efficiently on a bipartite graph, in practice, building  $G_{\mathit{MX}}(I)$ is more expensive than solving $I$. Instead, we adapt a greedy algorithm for vertex cover~\cite{papadimitriou1982combinatorial} to achieve near-optimal performance. The greedy algorithm selects a pair of states that are not part of the vertex cover subset selected so far and are connected by an edge in the graph. It then adds both states to the vertex cover. We introduce a new algorithm, \NBA, which uses the same approach to achieve near-optimal node expansions while finding the shortest path.

The pseudocode for \NBA\ is shown in Algorithms~\ref{alg:nbs} and~\ref{alg:exp}.
\NBA\  considers all pairs for which  $lb$ is smallest (line~\ref{line:pairs}).
Among these pairs, it first chooses the pairs $(U,V)$ with smallest cost $c(U)$ (line~\ref{tie-breakingU}) and then, among those,
the ones with smallest cost $c(V)$ (line~\ref{tie-breakingV}). \NBA\ picks an arbitrary pair $(U,V)$ from the remaining candidates and expands both $U$ and $V$ (lines~\ref{line:expF}/\ref{line:expB}).
Breaking ties in this way is necessary to guarantee that \NBA\ never expands a suboptimal path when its heuristics are consistent; other tie-breaking rules can be used.
An efficient data structure for implementing this path pair selection is described in Section~\ref{sec:datastructure}.

\begin{algorithm}[b!]
	\caption{\NBA}
	\relsize{-1}
	\begin{algorithmic}[1]
		\State $\CurrentBest$ $\leftarrow$ $\infty$
		\State $\openF \leftarrow \{\lambda_F\}$; $\openB \leftarrow \{\lambda_B\}$
		\State $\closedF \leftarrow \emptyset$ ; $\closedB \leftarrow \emptyset$
		\While {$\openF \neq \emptyset$ and $\openB \neq \emptyset$}\label{lin:loopfirst}
		\State $Pairs \leftarrow \openF \times \openB$	
		\State $lbmin \leftarrow \min\{lb(X,Y)\mid (X,Y) \in Pairs\}$
		\If{$lbmin \geq \CurrentBest$} \label{line:stop}
		\Return $\CurrentBest$
		\EndIf
		\State $minset \leftarrow \{(X,Y) \in Pairs \mid lb(X,Y)=lbmin\}$ \label{line:pairs}
		\State $Uset \leftarrow \{X\mid \exists Y (X,Y) \in minset\}$
		\State $Umin \leftarrow \min\{c(X)\mid X \in Uset \}$
		\State Choose any $U\in Uset$ such that $c(U)=Umin$\label{tie-breakingU}
		\State $Vset \leftarrow \{Y \mid (U,Y) \in minset\}$
		\State $Vmin \leftarrow \min\{c(Y)\mid Y \in Vset \}$
		\State Choose any $V\in Vset$ such that $c(V)=Vmin$\label{tie-breakingV}
		\State Forward-Expand($U$) \label{line:expF}
		\State Backward-Expand($V$) \label{line:expB}
		\EndWhile \label{lin:looplast}
		\State \Return $\CurrentBest$
	\end{algorithmic} \label{alg:nbs}
\end{algorithm}


\begin{algorithm}[t!]
	\caption{\NBA: Forward-Expand($U$) }
	\relsize{-1}
	\begin{algorithmic}[1]
		\State Move $U$ from \openF\ to \closedF\label{lin:closeU}
		\For {{\bf each} $W \in expand_F(U)$ }\label{lin:generate}
		\If{$\exists Y \in \openB$ with $end(Y)=end(W)$}  
		\State $\CurrentBest = \min(\CurrentBest,c(W)+c(Y))$
		\EndIf
		\If{$\exists X \in \openF \cup \closedF$ with $end(X)=end(W)$}\label{lin:DD}
		\If{$c(X) \le c(W)$}\label{lin:comparecosts}
		\State Continue for loop // discard $W$\label{lin:discard2}
		\Else
		\State remove $X$ from $\openF / \closedF$\label{lin:removeX}
		\EndIf
		\State Add $W$ to \openF\label{lin:addW}
		\EndIf
		\EndFor
	\end{algorithmic} \label{alg:exp}
\end{algorithm}

The pseudocode for backwards expansion is not shown, as it is analogous to forward expansion.
We have proofs that, for all problem instances in $I_{AD}$, \NBA\ returns \Cstar.
These are not included here because of space limitations, and
because they are very similar to the corresponding proof for MM.

\section{Bounded Suboptimality in State Expansions}

Theorem~\ref{thm:sufficient_condition} identifies the set of state pairs that must be expanded by any admissible \DXBB\ front-to-end bidirectional algorithm. We refer to these as surely expanded (s.e.) path pairs.
The theorem does not stipulate which state in each s.e. pair must be expanded; an algorithm is free to make that choice in any manner.  Different choices can lead to vastly different numbers of expansions.
Given that $\VC$ is the size of a minimum vertex cover for $G_{\mathit{MX}}$, we have shown above that at least $\VC$ expansions are required.
In this section we prove that on the subset of consistent problem instances \NBA\ never expands more than $2\VC$ states to cover all the s.e. pairs, and that for every \DXBB\ front-to-end bidirectional algorithm $A$ there exists a problem instance in $I_{CON}$ on which $A$ expands at least $2\VC$ states to cover all the s.e. pairs.
That means that the suboptimality of \NBA\ is bounded by a factor of two, and that no competing algorithm can do better in the worst case.

\begin{theorem}\label{thm:factor2}
Let $I\in I_{CON}$, let $G_{\mathit{MX}}(I)$ be the Must-Expand Graph, and let $\VC(I)$ be the size of the smallest vertex cover of $G_{\mathit{MX}}$. Then \NBA\ does no more than $2\VC(I)$ state expansions on $G_{\mathit{MX}}(I)$ to cover its s.e. pairs.
\end{theorem}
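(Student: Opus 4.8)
The plan is to exhibit \NBA's expansions as an instance of the standard greedy maximal-matching $2$-approximation for vertex cover of~\cite{papadimitriou1982combinatorial}, and then invoke the textbook duality inequality $\VC(I) \ge |M|$ for any matching $M$ of $G_{\mathit{MX}}(I)$. Concretely, I would restrict attention to the main-loop iterations in which the selected pair $(U,V)$ satisfies $lb(U,V) = lbmin < \Cstar$; call these the \emph{covering iterations}, and let $k$ be their number. Every main-loop iteration calls Forward-Expand on $U$ and Backward-Expand on $V$, so it performs exactly two expansions, one forward at $end(U)$ and one backward at $end(V)$. Hence the covering iterations account for exactly $2k$ expansions, and it suffices to show (a) that these $2k$ expansions already cover all s.e.\ pairs, and (b) that $k \le \VC(I)$.

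First I would argue that the pair selected in a covering iteration is an \emph{edge} of $G_{\mathit{MX}}(I)$. Because the heuristics are consistent and \NBA\ breaks ties so that it never expands a suboptimal path, the paths $U$ and $V$ it expands are optimal forward and backward paths ending at $end(U)$ and $end(V)$; together with $lb(U,V) < \Cstar$ this is exactly the condition of Definition~\ref{def:must_expand_graph} for an edge between $end(U)_F$ and $end(V)_B$. Next I would show that the collection $M = \{(end(U_i)_F, end(V_i)_B)\}_{i=1}^{k}$ of these edges is a \emph{matching}. The crux is vertex-disjointness: once \NBA\ forward-expands a state it moves that state's (unique, by the duplicate-detection in Forward-Expand) path from \openF\ to \closedF, so no path ending at an already forward-expanded state can later be chosen from \openF; symmetrically for backward expansions. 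Thus the forward endpoints $end(U_i)$ are pairwise distinct and the backward endpoints $end(V_i)$ are pairwise distinct, and since left vertices $u_F$ and right vertices $v_B$ are of different types in the bipartite graph, no two edges of $M$ share a vertex.

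With $M$ a matching, the standard bound gives $\VC(I) \ge |M| = k$, since every vertex cover must contain at least one of the two (distinct) endpoints of each edge of $M$. Therefore the covering iterations perform $2k \le 2\VC(I)$ expansions. To finish, I would confirm that these expansions cover every s.e.\ pair: by Theorem~\ref{thm:goodnews} \NBA's expansions form a vertex cover of $G_{\mathit{MX}}(I)$, and any expansion performed in a non-covering iteration has $lbmin \ge \Cstar$ and so corresponds to no edge; using the minimality of $lbmin$ over $\openF \times \openB$ one argues that once $lbmin \ge \Cstar$ no uncovered edge can remain, since an uncovered edge would supply a pair of open paths with $lb < \Cstar \le lbmin$, a contradiction. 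Hence all s.e.\ pairs are already covered within the $2k$ covering-iteration expansions.

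The main obstacle I anticipate is the coverage claim (a), not the counting bound (b): establishing it rigorously requires knowing that the optimal paths $U$ and $V$ witnessing an uncovered edge are actually present on \openF\ and \openB\ at the moment $lbmin$ reaches $\Cstar$, which is precisely the completeness/optimality behaviour of \NBA\ that this paper states but defers. Everything else --- the two-expansions-per-iteration accounting, the optimality of expanded paths, and the disjointness that makes $M$ a matching --- is routine given consistency and the closed-list bookkeeping.
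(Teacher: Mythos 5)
Your proposal is correct and takes essentially the same approach as the paper: \NBA's selected pairs in the sub-$\Cstar$ iterations are edges of $G_{\mathit{MX}}(I)$ whose two endpoints are expanded together, and since no state is re-expanded these edges form a matching, so the number of such iterations is at most $\VC(I)$, giving the $2\VC(I)$ bound. The paper's own proof is simply a terser rendering of this greedy matching / vertex-cover $2$-approximation argument (leaving the disjointness and coverage details implicit), so your write-up is the same idea made more rigorous rather than a different route.
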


\begin{proof}
If $(u,v)$ is a s.e. pair then $lb(U,V) < \Cstar$ for every optimal forward path $U$ from $start$ to $u$ and every optimal backward path from $goal$ to $v$. \NBA\ will select exactly one such $(U,V)$ pair for expansion and expand both $end(U)=u$ and $end(V)=v$.
A minimum vertex cover for $I$ might require only one of them to be expanded, so for each expansion required by a minimum vertex cover, \NBA\ might do two.
\end{proof}

Theorems~\ref{thm:goodnews} and \ref{thm:factor2} yield the following result.

\begin{corollary}\label{cor:bound}
Let $I\in I_{CON}$ and let $A$ be any admissible front-to-end DXBB bidirectional algorithm. Then \NBA\ makes no more than twice the number of state expansions on input $I$ than $A$ does in covering $I$'s s.e. pairs.
\end{corollary}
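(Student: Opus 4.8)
The plan is to derive Corollary~\ref{cor:bound} directly by composing the two bounds already established, treating both as statements about the quantity $\VC(I)$. First I would fix an arbitrary $I\in I_{CON}$ and an arbitrary admissible front-to-end \DXBB\ bidirectional algorithm $A$, and let $S_F$ and $S_B$ denote the forward- and backward-expanded state sets that $A$ produces on input $I$. By Theorem~\ref{thm:goodnews}, $S_F\cup S_B$ corresponds to a vertex cover of $G_{\mathit{MX}}(I)$, so the number of expansions $A$ uses to cover $I$'s s.e.\ pairs satisfies $|S_F|+|S_B|\ge \VC(I)$, since $\VC(I)$ is by definition the size of a \emph{minimum} vertex cover and hence no vertex cover can be smaller.

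Next I would invoke Theorem~\ref{thm:factor2}, which states that \NBA\ performs at most $2\VC(I)$ state expansions on $I$ to cover its s.e.\ pairs. Chaining the two inequalities gives that the number of expansions made by \NBA\ is at most $2\VC(I)\le 2\bigl(|S_F|+|S_B|\bigr)$, which is exactly the claim that \NBA\ makes no more than twice the number of expansions $A$ uses. The argument is therefore a two-line sandwich: $\VC(I)$ lower-bounds $A$'s count (Theorem~\ref{thm:goodnews}) and upper-bounds half of \NBA's count (Theorem~\ref{thm:factor2}), so \NBA's count is at most $2\VC(I)\le 2\,(\text{$A$'s count})$.

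The one subtlety I would be careful about is that both bounds must be read as referring to the \emph{same} accounting of expansions, namely expansions counted toward covering the s.e.\ pairs of $G_{\mathit{MX}}(I)$. Theorem~\ref{thm:goodnews} guarantees that $A$'s expansions form a vertex cover, so $A$ cannot cover the s.e.\ pairs with fewer than $\VC(I)$ expansions; this is the side where I would make sure the minimality of $\VC(I)$ is used explicitly, since that is what forces $A$'s count above the lower bound rather than merely equal to some arbitrary cover. Everything else is an immediate transitivity step, so there is essentially no real obstacle beyond stating the composition cleanly; the corollary is a direct consequence and requires no new construction.

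\begin{proof}
Fix $I\in I_{CON}$ and let $A$ be any admissible front-to-end \DXBB\ bidirectional algorithm. Let $S_F$ and $S_B$ be the sets of states $A$ expands on input $I$ in the forward and backward directions, respectively. By Theorem~\ref{thm:goodnews}, $S_F$ and $S_B$ together correspond to a vertex cover of $G_{\mathit{MX}}(I)$, and since $\VC(I)$ is the size of a minimum such cover, the number of expansions $A$ uses to cover $I$'s s.e.\ pairs satisfies $|S_F|+|S_B|\ge \VC(I)$. By Theorem~\ref{thm:factor2}, the number of expansions \NBA\ makes on $I$ to cover its s.e.\ pairs is at most $2\VC(I)$. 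Combining the two bounds,
\[
	\bigl(\text{\NBA\ expansions}\bigr)\;\le\;2\VC(I)\;\le\;2\bigl(|S_F|+|S_B|\bigr)\;=\;2\bigl(\text{$A$'s expansions}\bigr)\,,
\]
which is the claimed factor-of-two bound.
\end{proof}
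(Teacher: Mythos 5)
Your proof is correct and follows exactly the route the paper intends: the paper gives no explicit proof but states that Theorems~\ref{thm:goodnews} and~\ref{thm:factor2} yield the corollary, and your two-line sandwich ($\VC(I)$ lower-bounds $A$'s expansion count by Theorem~\ref{thm:goodnews}, while \NBA's count is at most $2\VC(I)$ by Theorem~\ref{thm:factor2}) is precisely that composition, written out explicitly.
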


For any algorithm $A$, let us use the term \emph{worst-case expansion ratio of $A$}\/ to refer to the ratio $\max_{I \in I_{CON}}\frac{\#A(I)}{\#(I)}$, where $\#A(I)$ is the number of states $A$ expands in covering the s.e. pairs in instance $I$, and $\#(I)$ is the smallest number of states any
DXBB front-to-end bidirectional search algorithm expands in covering the s.e. pairs in $I$.
By definition, $\#(I) \ge VC(I)$, so we can
rephrase Corollary~\ref{cor:bound} as follows: 
\begin{equation}\label{eq:2x}
	\mbox{\NBA's worst-case expansion ratio is at most 2.}
\end{equation}

We now demonstrate that \NBA\ is optimal in the sense that no admissible DXBB front-to-end bidirectional search algorithm  has a worst-case expansion ratio smaller than 2.

\begin{theorem}\label{thm:worstcase}
	Let $A$ be any admissible DXBB front-to-end bidirectional search algorithm. Then there exists a problem instance $I$ and a DXBB front-to-end bidirectional search algorithm  $B$ such that $A$ expands at least twice as many states in solving $I$ as $B$ expands in solving $I$.
\end{theorem}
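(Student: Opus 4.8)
The plan is to prove Theorem~\ref{thm:worstcase} by an adversary (lower-bound) argument. I want to exhibit a single problem instance $I \in I_{CON}$ whose Must-Expand Graph $G_{\mathit{MX}}(I)$ has a minimum vertex cover achievable by expanding entirely in one direction (say all-forward, or all-backward), but where any \emph{deterministic} algorithm $A$, because it must work correctly on a whole \emph{family} of instances that are indistinguishable to it in its early steps, is forced to commit to expansions in the wrong direction and thereby doubles its count. The competing algorithm $B$ in the statement is then simply the "clairvoyant" algorithm that knows which direction is optimal for this particular $I$ and covers $G_{\mathit{MX}}(I)$ with the minimum one-sided vertex cover.

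**First I would** construct a symmetric gadget. The natural candidate is a star-like or path-like instance where $G_{\mathit{MX}}(I)$ is (isomorphic to) a single edge or a small symmetric bipartite graph whose minimum vertex cover has size~$1$ but where the "obvious" one-sided covers on the two sides are symmetric. Concretely, I would build a pair of instances $I_F$ and $I_B$ that are identical from the point of view of a \DXBB\ algorithm up to the moment a decision must be made --- same $start$, $goal$, edge costs, and heuristic values on every state the algorithm has so far generated --- but such that in $I_F$ the unique minimum cover is realized by a forward expansion and in $I_B$ by a backward expansion. Because $A$ is deterministic and has only black-box access (it cannot peek beyond what $expand$, $h_F$, $h_B$, and $c$ have revealed), it must make the \emph{same} choice on $I_F$ and $I_B$ at the branching point; on at least one of the two it guesses wrong and is then forced into a second expansion to cover the s.e.\ pair, giving it $2$ expansions against $B$'s~$1$.

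**The key steps, in order, would be:** (i) define the gadget state space, edge costs, and consistent heuristics $h_F,h_B$ explicitly, and verify $I_F,I_B \in I_{CON}$ (consistency and admissibility are routine to check on a tiny graph); (ii) compute $\Cstar$ and identify the s.e.\ pair(s) via Definition~\ref{def:lb}, confirming that $G_{\mathit{MX}}$ has a size-$1$ minimum vertex cover; (iii) argue the indistinguishability: the sequence of paths generated and the values returned by all black-box queries are identical on $I_F$ and $I_B$ until $A$ first expands a state on the "deciding" side, so $A$'s behavior up to that point is forced to coincide; (iv) conclude that whichever fixed choice $A$ makes, there is an instance (one of $I_F$, $I_B$) on which that choice covers the s.e.\ pair only after expanding in the wrong direction first, costing a second expansion; (v) exhibit $B$ as the direction-aware algorithm that covers the same instance with one expansion, so $\#A(I)/\#B(I) \ge 2$.

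**The hard part will be** step~(iii): making the indistinguishability argument airtight. I must ensure the two instances genuinely agree on every quantity $A$ can observe before it commits --- not only on generated states and their immediate heuristic/cost values, but also that $A$ cannot infer the hidden difference from, e.g., the solution cost bound \CurrentBest\ or the structure of $Open_F\times Open_B$. The cleanest way is to design $I_F$ and $I_B$ to differ only in edges or costs that lie strictly "behind" the deciding state in both directions, so those differences are only exposed by the very expansion whose direction is in question; then determinism plus black-box access forces $A$'s pre-commitment transcript to be identical. A secondary subtlety is confirming that $A$ must \emph{cover the s.e.\ pair} (Theorem~\ref{thm:goodnews} guarantees $A$'s expansions form a vertex cover, so $A$ cannot simply skip the pair), which pins down that the wrong guess genuinely necessitates a corrective second expansion rather than being avoidable.
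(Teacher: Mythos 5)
Your proposal takes essentially the same approach as the paper's proof: the paper likewise uses an adversary argument with two instances $I_1$ and $I_2$ (zero heuristics, differing only in the costs of edges $(s,t)$ and $(t,g)$ hidden behind the first expansion) that a deterministic black-box algorithm $A$ cannot distinguish before committing to its first move, so whichever choice $A$ makes (forward, backward, or both), one instance forces it to two expansions while a clairvoyant $B$ solves that instance with one. Your gadget design, indistinguishability step, and direction-aware $B$ match the paper's argument, so the proposal is correct and essentially identical in structure.
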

\begin{proof} Consider the two problem instances $I_1$ and $I_2$ in Figure~\ref{fig:2x}. In these instances $h_F(n)=h_B(n)=0$ for all $n$, $s$ is the start and $g$ is the goal. Assume $A$ is given either one of these instances as input. Since $A$ is DXBB and cannot initially distinguish $I_1$ from $I_2$, it must initially behave the same on both instances. Hence, on both instances, $A$ will initially either expand $s$ in the forward direction, expand $g$ in the backward direction, or expand both $s$ and $g$. 

\begin{figure}[b]
	\centerline{
		\includegraphics[width=6cm]{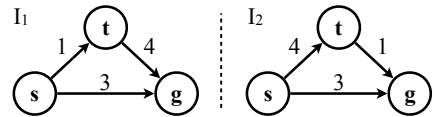}
	} \caption{Two problem instances with $C^*=3$ differing only in the costs of the edges $(s,t)$ and $(t,g)$.}
	\label{fig:2x}\vspace{-0.1cm}
\end{figure}
	\indent If $A$ first expands $s$ in the forward direction, consider $I=I_1$. On instance $I$, the algorithm $A$ has to expand a second state (either $g$ in the backward direction or $t$ in the forward direction) in order to be able to terminate with the optimal solution path $(s, g)$. By comparison, an algorithm $B$ that first expands $g$ in the backward direction will terminate with the optimal solution after just a single state expansion. Here we assume that $B$ terminates when there are no pairs satisfying the sufficient condition for node expansion.
	
	If $A$ first expands $g$ in the backward direction, one can argue completely symmetrically, with $I=I_2$ and $B$ being an algorithm that first expands $s$ in the forward direction.
	
 If $A$ begins by expanding both $s$ and $g$, as \NBA\ does, then on both these instances it will have expanded two states when only one expansion was required.
\end{proof}

\begin{figure}[t]
	\centering
	\includegraphics[width=6cm]{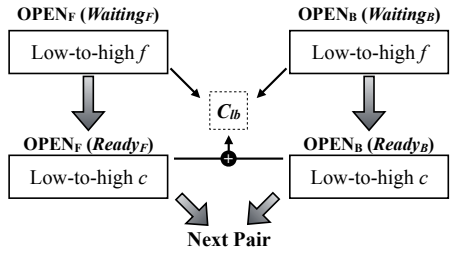}
	\caption{The open list data structure.}
	\label{fig:open-diagram}
\end{figure}

\begin{figure}[b!]
	\centering
	\includegraphics[width=8cm]{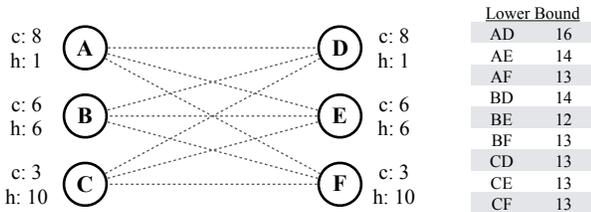}
	\caption{Sample state space and priorities of state pairs}
	\label{fig:open-example}
\end{figure}

\section{Efficient Selection of Paths for Expansion}\label{sec:datastructure}

Algorithm \ref{alg:nbs} assumes that \NBA\ can efficiently compute $lbmin$ and select the best path pair $(U, V)$ for expansion. In this section we provide a new \open\ list that can do this efficiently. The data structure works by maintaining a lower bound on $lbmin$, $C_{lb}$. \NBA\ initalizes $C_{lb}$ to $0$ prior to its first iteration and each time a new path pair is needed, $C_{lb}$ is raised until it reaches $lbmin$ and a new path pair is found.

This data structure is illustrated in Figure~\ref{fig:open-diagram}. The data structure is composed of two priority queues for each direction. The first priority queue is \waitF. It contains paths with $\fF \geq C_{lb}$ sorted from low to high \fF. The second priority queue is \readyF. It contains paths with $\fF \leq C_{lb}$ sorted from low to high \cF. Analogous queues are maintained in the backward direction. When paths are added to \open, they are first added to \waitF\ and \waitB. After processing, paths are removed from the front of \readyF\ and \readyB. The current value of $C_{lb}$ is the minimum of the $f$-costs at the front of \waitF\ and \waitB\ and the sum of the $c$-costs at the front of \readyF\ and \readyB.

\begin{algorithm}[t!]
	\caption{\NBA\ pseudocode for selecting the best pair from \open\ list. $C_{lb}$ is set to 0 when the search begins.}
	\relsize{-1}
	\begin{algorithmic}[1]
		\Procedure{PrepareBest}{}
			\While {min $f$ in \waitD\ $< C_{lb}$}\label{alg:ds-low}
				\State move best node from \waitD\ to \readyD\
			\EndWhile
			\While {true}
				\If {\readyD\ $\cup$ \waitD\ empty}\label{alg:ds-fail}
					\Return false
				\EndIf
				\If {\readyF$.c$ + \readyB$.c \leq C_{lb}$} \label{alg:ds-success}
					\Return true
				\EndIf
				\If {\waitD$.f \leq C_{lb}$ } \label{alg:ds-equalf}
					\State move best node from \waitD\ to \readyD\
				\Else
					\State $C_{lb}$ = min(\waitF$.f$, \waitB$.f$, \readyF$.c$+\readyB$.c$) \label{alg:ds-increase}
				\EndIf
			\EndWhile
			\EndProcedure
	\end{algorithmic} \label{alg:data_structure}
\end{algorithm}

Pseudocode for the data structure is in Algorithm~\ref{alg:data_structure}. Where forward or backwards queues are designated with a $D$, operations must be performed twice, once in each direction. We use the notation \readyF$.c$ to indicate the smallest $c$-cost on \ready\ in the forward direction. At the end of the procedure the paths on \ready$_F$ and \ready$_B$ with the smallest individual $c$-costs together form the pair to be expanded.

The procedure works as follows.
First, paths with $f$-cost lower than $C_{lb}$ must immediately be moved to \ready\ (line \ref{alg:ds-low}). If \readyD\ and \waitD\ are jointly empty in either direction, the procedure is halted and the search will terminate (line \ref{alg:ds-fail}). If the best paths in \ready\ have $\cF(U)+\cB(V) \leq C_{lb}$, the procedure completes; these paths will be expanded next (line \ref{alg:ds-success}).

If the \readyD\ queue is empty in either direction, any paths with $f=C_{lb}$ can be moved to \ready\ (line \ref{alg:ds-equalf}). While we could, in theory, move all such paths to \ready\ in one step, doing so incrementally allows us to break ties on equal $f$ towards higher $c$ first, which slightly improves performance. If there are no paths with $f \leq C_{lb}$ in \wait\ and in \ready\ with $\cF(U)+\cB(V) \leq C_{lb}$, then the $C_{lb}$ estimate is too low, and $C_{lb}$ must be increased (line \ref{alg:ds-increase}).

We illustrate this on an artificial example from Figure~\ref{fig:open-example}.\footnote{This example also illustrates why we cannot just sort by minimum \fF\ or \cF\ when performing expansions.}
To begin, $C_{lb} \leftarrow 0$ and we assume that $(A, B, C)$ are on \waitF\ and $(D, E, F)$ are on \waitB. First, $C_{lb}$ is set to 9 (line \ref{alg:ds-increase}). Then, $A$ and $D$ can be added to \ready\ because they have lowest $f$-cost, and $C_{lb} = 9$. However, $\cF(A) + \cF(D) = 16 > C_{lb} = 9$, so we cannot expand $A$ and $D$. Instead, we increase $C_{lb}$ to 12 and then add $B$ and $E$ to ready. Now, because $\cF(B) + \cF(E) = 12 \leq C_{lb}$ we can expand $B$ and $E$.

We can prove that the amortized runtime over a sequence of basic operations of our data structure (including insertion and removal operations) is $O(\log(n))$, where $n$ is the size of \wait\ $\cup$ \ready.

\begin{table*}[tb]
\caption{Average state expansions for unidirectional (A*) and bidirectional search across domains.}
\begin{center}
\small
\begin{tabular}{|l|l|l|c||r||r|r|r||r|}\hline
Domain & Instances & Heuristic & Strength & A* & BS* & MMe & \NBA\ & MM$_0$ \\ \hline
Grids & DAO & Octile & $+$ & {\bf 9,646} & 11,501 & 13,013 & 12,085 & 17,634 \\
Grids & Mazes & Octile & $-$ &  64,002  &  42,164  &  51,074  & {\bf 34,474}  &  51,075 \\ \hline

4 Peg TOH & 50 random & 12+2 PDB & $++$ &  1,437,644 &  {\bf 1,106,189}  & 1,741,480 & 1,420,554 & 12,644,722  \\
4 Peg TOH & 50 random & 10+4 PDB & $-$ & 19,340,099 &  8,679,443  & 11,499,867 & {\bf 6,283,143} & 12,644,722 \\ \hline

16 Pancake & 50 random & GAP & $+++$ & {\bf 125} & 339 & 283 & 335 & {\em unsolvable}\\
16 Pancake & 50 random & GAP-2 & $-$ &  1,254,082 &  947,545 &  {\bf 578,283} &   625,900  & {\em unsolvable}\\
16 Pancake & 50 random & GAP-3 & $--$ &  {\em unsolvable} &  29,040,138 & 7,100,998 &   {\bf 6,682,497}  & {\em unsolvable}\\ \hline
15 puzzle & $[$Korf,1985$]$ & MD & $+$ &  15,549,689 & {\bf 12,001,024} & 13,162,312 & 12,851,889 & {\em unsolvable} \\ \hline
\end{tabular}
\end{center}
\label{results}
\end{table*}%

\begin{table}[tb]
	\caption{Average running time and expansions per second for unidirectional (A*) and bidirectional search across domains.}
	\begin{center}
		\small
		\begin{tabular}{|l|c||r||r|r|r|}\hline
			\multicolumn{6}{|c|}{{\bf Average Running Time} (in seconds)} \\ \hline
			Domain & $h$ & A* & BS* & MMe & \NBA  \\ \hline
			DAO &  Octile & 0.005 & 0.006 & 0.015 & 0.007 \\
			Mazes & Octile  & 0.035 & 0.022 & 0.060 & 0.019  \\ \hline
			TOH4 & 12+2 &  3.23 & 2.44 & 4.17 & 3.54   \\
			TOH4 & 10+4 & 52.08 & 23.06 & 30.64 & 16.60  \\ \hline
			Pancake & GAP  & 0.00 & 0.00 & 0.00 & 0.00   \\
			Pancake & GAP-2  & 14.16 & 4.91 & 5.25 & 5.23   \\ 
			Pancake & GAP-3  & N/A & 212.33 & 72.13 & 77.17  \\ \hline
			15 puzzle  & MD & 47.68 &  29.59 & 41.38 & 37.67  \\ \hline
			\multicolumn{6}{|c|}{{\bf Expansion Rate} ($\times10^3$ nodes per seconds)} \\ \hline
			DAO &  Octile &  1,896 & 1,912  &  851 & 1,662  \\
			Mazes & Octile  & 2,225 & 2,366 &  848 & 2,290 \\ \hline
			TOH4 & 12+2 &  444 & 453 & 418 & 401    \\
			TOH4 & 10+4 & 371 & 376 & 375 & 379  \\ \hline
			Pancake & GAP  &  156 &  564  &  564  &  153    \\
			Pancake & GAP-2  &  89 &  193  &  109  &  120    \\ 
			Pancake & GAP-3  & N/A & 137 & 98 & 87  \\ \hline
			15 puzzle  & MD &  326 &   406 &  318  &  338  \\ \hline
		\end{tabular}
	\end{center}
	\label{time_results}
\end{table}%

\section{Experimental Results}

\begin{figure}[tb]
	\centering
	\includegraphics[width=6cm]{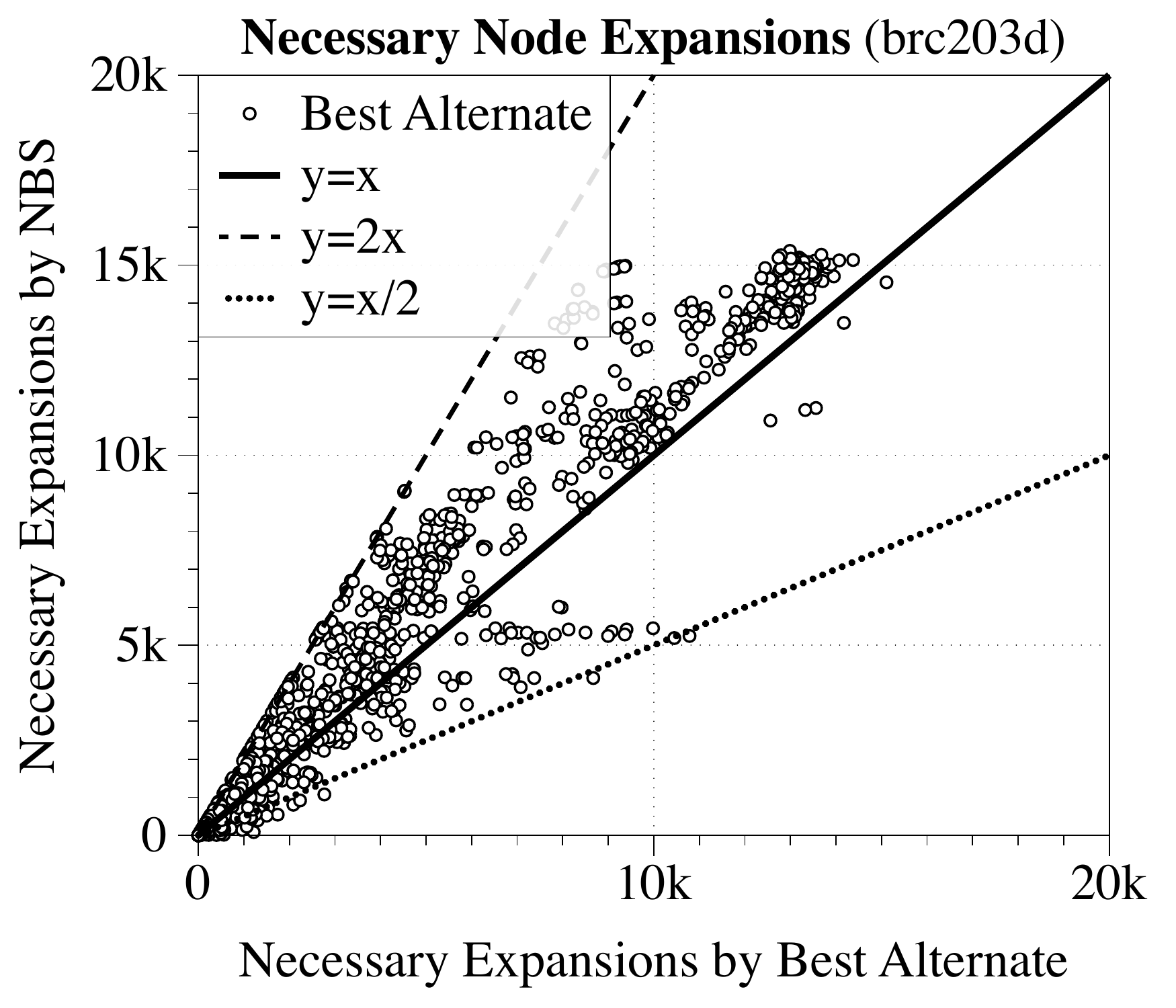}
	\caption{A comparison between necessary expansions by \NBA\ (y-axis) and the minimum of the expansions with $f<\Cstar$ by MMe, BS* and A* (x-axis) on each problem instance.}
	\label{fig:scatter}
\end{figure}

There are two primary purposes of the experimental results. First, we want to validate that our implementation matches the theoretical claims about \NBA. Second, we want to compare the overall performance of \NBA\ to existing algorithms. This comparison includes total node expansions, necessary expansions, and time. We compare A*, BS*\footnote{BS* is not an admissible algorithm (it will not optimally solve problems with an inconsistent heuristic) so the theory in this paper does not fully apply to BS*.}, MMe (a variant of MM) \cite{sharon2016mme}, \NBA, and MM$_0$ (bidirectional brute-force search).
 We also looked at IDA*, but it was not competitive in most domains due to cycles.

In Table \ref{results}
we present results on problems from four different domains, including grid-based pathfinding problems~\cite{sturtevant2012benchmarks} (`brc' maps from Dragon Age: Origins (DAO)), random 4-peg Tower of Hanoi (TOH) problems, random pancake puzzles, and the standard 15 puzzle instances~\cite{korf85}. The canonical goal state is used for all puzzle problems.\footnote{On the 15-puzzle and TOH it is more efficient to search backwards because of the lower branching factor, but we search forward to the standard goal states.} On each of these domains we use standard heuristics of different strength. The octile, GAP~\cite{Helmert10}, and Manhattan Distance heuristics can be easily computed at runtime for any two states. The additive pattern databases used for Towers of Hanoi are computed for each instance. We selected the size of problems to ensure that as many algorithms as possible could solve them in RAM.

In grid maps we varied the difficulty of the problem by changing the map type/topology. In TOH and the pancake puzzle we varied the strength of the heuristic. The GAP-$k$ heuristic is the same as the GAP heuristic, except that gaps involving the first $k$ pancakes are not added to the heuristic. The approximate heuristic strength on a problem is indicated by a $+$ or $-$. The general trend is that with very strong heuristics, A* has the best performance. As heuristics get weaker, or the problems get harder, the bidirectional approaches improve relative to A*. \NBA\ is never far from the best algorithm, and on some problems, such as TOH, it has significantly better performance than all previous approaches.
Runtime and node expansions/second are found in Table~\ref{time_results}. \NBA\ is 30\% slower than A* on the DAO problems, but competitive on other problems. \NBA\ is slower than BS*, but this is often compensated for by performing fewer node expansions. 
%

In Table~\ref{ratio_results} we look at the percentage of total nodes on closed compared to total expansions with $f = \Cstar$ by each algorithm in each domain. For the majority of domain and heuristic combinations there are very few expansions with $f = \Cstar$. The exception is the pancake puzzle with the GAP heuristic. On random instances this heuristic is often perfect, so all states expanded have $f = \Cstar$. This is why \NBA\ does more than twice the number of expansions as A* on these problems---these expansions are not accounted for in our theoretical analysis. BS* puts nodes on closed that it does not expand, which is why it has a negative percentage.

\begin{table}[tb]
	\caption{Percent of expansions with ($f$-cost $=\Cstar$) for each algorithm/domains.}
	\begin{center}
		\small
		\begin{tabular}{|l|c||r||r|r|r|}\hline
			Domain & Heuristic & A* & BS* & MMe & \NBA  \\ \hline
			DAO &  Octile & 1.3\% & 0.6\% & 0.7\% & 1.2\%  \\
			Mazes & Octile  & 0.0\% & 0.0\% & 0.0\% & 0.0\%  \\ \hline
			TOH4 & 12+2 PDB & 0.0\% & 0.0\% & 0.0\% & 0.0\%  \\
			TOH4 & 10+4 PDB & 0.0\% & 0.0\% & 0.0\% & 0.0\%  \\ \hline
			Pancake & GAP  & 60.7\% & 81.6\% & 76.2\% & 81.0\%  \\
			Pancake & GAP-2  & 0.2\% & 1.6\% & 6.0\% & 0.0\%  \\ 
			Pancake & GAP-3  & N/A & -0.6\% & 5.7\% & 0.0\%  \\ \hline
			15 puzzle  & MD & 5.5\% &  0.5\%  &  0.6\% &  0.3\%    \\ \hline
		\end{tabular}
	\end{center}
	\label{ratio_results}
\end{table}%


Figure \ref{fig:scatter} shows a scatter plot of necessary node expansions on 1171 instances from the {\em brc203d} grid map where \NBA\ has slightly worse performance than A* and BS*, but better performance than MM.
Each point represents one problem instance, and plots \NBA\ necessary expansions against the {\em minimum} of the expansions with $f < \Cstar$ by A*, BS* and MMe.
The $y=2x$ line represents the theoretical maximum expansion ratio (Theorem~\ref{cor:bound}), which is never crossed.
\NBA\ often does much better than this theoretical worst case, and on approximately 30 instances is more than 2x better than all alternate algorithms, as they do
not have similar 2x expansion bounds.

\section{Conclusion}

This paper presents the first front-to-end heuristic search algorithm that is near-optimal in 
necessary node expansions. It thus addresses questions dating back to Pohl's work on
the applicability of bidirectional heuristic search~\cite{pohl71}. When the problems are hard or the
heuristic is not strong, \NBA\ provides a compelling alternative to A*.

%

%
%
%

\section{Acknowledgements}
Financial support for this research was in part provided by Canada's Natural Sciences and
Engineering Research Council (NSERC). This material is based upon work supported by the
National Science Foundation under Grant No. 1551406.

\bibliographystyle{named}
\bibliography{ref}

\end{document}